\DeclarePairedDelimiter\floor{\lfloor}{\rfloor}
\newcommand*\samethanks[1][\value{footnote}]{\footnotemark[#1]}
\newtheoremstyle{bfnote}%
{}{}%
{\itshape}{}%
{\bfseries}{.}%
{ }%
{\thmname{#1}\thmnumber{ #2}\thmnote{ (#3)}}
\theoremstyle{bfnote}
\newtheorem{definition}{Definition}
\newtheorem{lemma}{Lemma}
\newtheorem{corollary}{Corollary}
\newtheorem{proof}{Proof}
\ificcvfinal\pagestyle{empty}\fi
\begin{document}

\title{How Shift Equivariance Impacts Metric Learning for Instance Segmentation}

\author{Josef Lorenz Rumberger$\thanks{equal contribution}$ $^{1,2}$,
Xiaoyan Yu$\samethanks$ $^{1,3}$,
Peter Hirsch$\samethanks$ $^{1,3}$,
Melanie Dohmen$\samethanks$ $^{1,2}$,
Vanessa~Emanuela~Guarino$\samethanks$ $^{1,3}$,
Ashkan Mokarian$^{1}$,
Lisa Mais$^{1}$,
Jan Funke$^{4}$,
Dagmar Kainmueller$^{1}$
\\
$^1$ Max-Delbrueck-Center for Molecular Medicine in the Helmholtz Association (MDC),\\
Berlin, Germany, {\tt\small {\{firstnames.lastname\}@mdc-berlin.de}} \\ 
$^2$ Charité University Medicine, Berlin, Germany\\
$^3$ Humboldt-Universität zu Berlin, Faculty of Mathematics and Natural Sciences, Berlin, Germany\\
$^4$ HHMI Janelia Research Campus, Ashburn, VA, USA
}


\maketitle
\ificcvfinal\thispagestyle{empty}\fi

\begin{abstract}
Metric learning has received conflicting assessments concerning its suitability for solving instance segmentation tasks. It has been dismissed as theoretically flawed due to the shift equivariance of the employed CNNs and their respective inability to distinguish same-looking objects.
Yet it has been shown to yield state of the art results for a variety of tasks, and practical issues have mainly been reported in the context of tile-and-stitch approaches, where discontinuities at tile boundaries have been observed. To date, neither of the reported issues have undergone thorough formal analysis. 
In our work, we contribute a comprehensive formal analysis of the shift equivariance properties of encoder-decoder-style CNNs, which yields a clear picture of what can and cannot be achieved with metric learning in the face of same-looking objects. 
In particular, we prove that a standard encoder-decoder network that takes $d$-dimensional images as input, with $l$ pooling layers and pooling factor $f$, has the capacity to distinguish at most $f^{dl}$ same-looking objects, and we show that this upper limit can be reached.
Furthermore, we show that to avoid discontinuities in a tile-and-stitch approach, assuming standard batch size 1, it is necessary to employ valid convolutions in combination with a training output window size strictly greater than $f^l$, while at test-time it is necessary to crop tiles to size $n\cdot f^l$ before stitching, with $n\geq 1$. 
We complement these theoretical findings by discussing a number of insightful special cases for which we show empirical results on synthetic and real data. 
\newline
Code:\url{https://github.com/Kainmueller-Lab/shift_equivariance_unet}
\end{abstract}
\section{Introduction}
Metric learning is a popular proposal-free technique for instance segmentation that often yields state-of-the-art results, particularly in applications from the biomedical domain for which proposal-based techniques do not apply~\cite{merhof2019instance,de2017semantic,kulikov2020instance,lee2019learning,lee2017superhuman,novotny2018semi,rumberger2020probabilistic}. 
In discord with its empirical success, numerous works from the computer vision community have noted a theoretical deficiency of metric learning for instance segmentation, namely that same-looking objects cannot be distinguished by means of shift equivariant CNNs~\cite{liu2018intriguing,novotny2018semi}. 
Empirical attempts at tackling this apparent deficiency include leveraging  pixel coordinates or encodings of said as additional inputs or features~\cite{kulikov2020instance,neven2019instance,novotny2018semi,stringer2020cellpose}, or limiting the problem to distinguishing neighboring objects~\cite{merhof2019instance,kulikov2018instance}, while related theoretical work is limited to discussions of shift equivariance properties of individual CNN layers like pooling~ \cite{azulay2018deep,scherer2010evaluation,zhang2019making} and upsampling~\cite{odena2016deconvolution}.

What is thus lacking to date is a comprehensive formal analysis of the shift equivariance properties of the encoder-decoder style CNNs typically employed for metric-learning-based instance segmentation, as well as an assessment of respective implications concerning the capacity of said CNNs to distinguish same-looking objects. 
To this end, in this paper, we prove that an encoder-decoder-style CNN with $l$ pooling layers and pooling factor $f$ is periodic-$f^l$ shift equivariant, and in consequence has the capacity to distinguish at most $f^{d l}$ instances of identical appearance in $d$-dimensional input images.

Concerning practical issues, biomedical applications often deal with large 3d input images and thus apply CNNs for instance segmentation in a tile-and-stitch manner to cope with GPU memory constraints. Here, issues with discontinuities in predictions at output tile boundaries, which lead to false splits of objects, have been reported~\cite{lee2019learning,reina2020systematic}. However, again, a formal analysis of the causes is lacking to date. %
To this end, we show that the potential for discontinuities to arise is intricately tied to the shift equivariance properties of the employed CNNs. 
We focus on metric learning with discriminative loss as a showcase~\cite{de2017semantic}, because it facilitates theoretical insights via cleanly visible effects: Training for constant embeddings within individual instances conveniently entails that discontinuities in predictions manifest as jumps. 
Our respective theoretical analysis entails simple rules for designing CNNs that are necessary to avoid discontinuities when predictions are obtained in a tile-and-stitch manner. 
\section{Analysis of Shift Equivariance Properties}
\begin{figure*}[ht]
    \centering
    \includegraphics[width=0.8\textwidth]{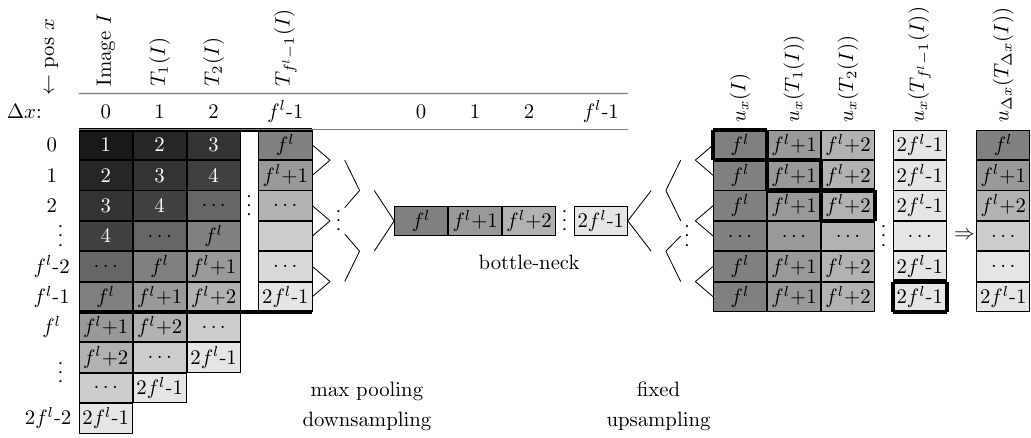}
    \caption{Illustration of a U-Net instance and a 1-dimensional image $I$ such that the functions $\textnormal{u}_{\Delta x}$ are relative-distinct for all $\Delta x$ with $0\leq\Delta x <f^l$.
    }
    \label{fig:fl_functions_in_fixed_unet}
\end{figure*}
We first define the broad family of U-Net-style encoder-decoder CNNs~\cite{ronneberger2015u} we consider, followed by a definition of \mbox{periodic-t} shift equivariance. Based on these prerequisites, we prove \mbox{periodic-$f^l$} shift equivariance of U-Nets. 
 
We consider CNNs consisting of $l$ downsampling and $l$ upsampling blocks. A downsampling block consists of a number of conv+nonlinearity layers, followed by max-pooling with downsampling factor (i.e.\ kernel size and stride) $f$. An upsampling block consists of a number of conv+nonlinearity layers, followed by upsampling by factor $f$, either via nearest-neighbor interpolation (\emph{fixed upsampling}) or via transposed convolution (\emph{learnt upsampling}). 
At each downsampling level of the U-Net, skip connections concatenate the output of the downsampling block before pooling to the input of the respective upsampling block after upsampling, except for the bottom level (also called \emph{bottleneck}). 
In the following, we refer to any achitecture of the above family as a \emph{U-Net}, and to a U-Net with specific weights as a \emph{U-Net instance}. A U-Net has the \emph{capacity} to have some property iff there exists an instance of that U-Net with said property. 
If not noted otherwise, we assume that a U-Net outputs all predictions for an image in one go. Sliding-window / tile-and-stitch mode will be discussed in Section~\ref{sec:tile-and-stitch}.
Furthermore, if not noted otherwise, we assume valid convolutions in all conv layers. Non-valid padding will be discussed in Section~\ref{sec:same-padding}.

Formally, a U-Net is a function U that maps a discrete, $d$-dimensional input image $I$ with resolution $X_1^{in}\times...\times X_d^{in}$ and $C^{in}$ channels to an output image with resolution $X_1^{out}\times ... \times X_d^{out}$ and $C^{out}$ channels:
\begin{equation}
\begin{aligned}
\textnormal{U}: \ &\mathbf{R}^{X_1^{in}\times..\times X_d^{in}\times C^{in}}\!\!\rightarrow\!\mathbf{R}^{X_1^{out}\times..\times X_d^{out} \times C^{out}} \\
&I \mapsto \textnormal{U}(I)\!=\!\left(\textnormal{u}_{\mathbf{x}}(I)\right)_{\mathbf{x}\in X_1^{out}\times..\times X_d^{out}},
\end{aligned}
\end{equation}
where
\begin{equation}
\begin{aligned}
\textnormal{u}_{\mathbf{x}}: \mathbf{R}^{X_1^{in}\times ...\times X_d^{in}\times C^{in}} \rightarrow \mathbf{R}^{C^{out}} \
I \mapsto \textnormal{u}_{\mathbf{x}}(I) = \textnormal{U}(I)(\mathbf{x}) \label{eq:u1}
\end{aligned}
\end{equation}
denotes the function that yields the output at output location $\mathbf{x}\in X_1^{out}\times ...\times X_d^{out}$.
Concerning functions $\textnormal{u}_{\mathbf{x}}$, two distinct notions of \emph{equality} can be defined:
\begin{definition}[Absolute and Relative Equality]
Two functions $\textnormal{u}_{\mathbf{x}_1}$, $\textnormal{u}_{\mathbf{x}_2}$ are \textbf{absolute-equal} iff $\forall I: \textnormal{u}_{\mathbf{x}_1}(I)=\textnormal{u}_{\mathbf{x_2}}(I)$,
and \textbf{absolute-distinct} otherwise. 
Two functions $\textnormal{u}_{\mathbf{x}_1}$,  $\textnormal{u}_{\mathbf{x}_2}$ are \textbf{relative-equal} iff 
$
\forall I: \textnormal{u}_{\mathbf{x}_1}(I)=\textnormal{u}_{\mathbf{x}_2}(T_{\mathbf{x}_2-\mathbf{x}_1}(I)),
$
with $T_{\Delta\mathbf{x}}(I(\mathbf{x})):= I(\mathbf{x}-\Delta\mathbf{x})$ denoting an image shift by $\Delta\mathbf{x}$.
Otherwise $\textnormal{u}_{\mathbf{x}_1}$ and $\textnormal{u}_{\mathbf{x}_2}$ are \textbf{relative-distinct}.
\end{definition}
\noindent We provide examples for absolute and relative equality of U-Net functions in Suppl.~Sec.~1. 
Following \cite{zhang2019making}, we define \emph{periodic-t shift equivariance} as follows:
\begin{definition}[Periodic-t Shift Equivariance]
A function $\textnormal{F}$ that maps an input image $I$ to an output image $\textnormal{F}(I)$ is \textbf{periodic-t shift equivariant} iff 
$
\textnormal{F}(T_{\Delta\textbf{x}}(I)) = T_{\Delta\textbf{x}}(\textnormal{F}(I))$  $\forall \Delta\textbf{x} \in \{ (z_1\cdot t, ..., z_d\cdot t) \ | \  z_i \in \mathbf{Z} \},
$
and $t$ is the smallest number for which this holds. 
\end{definition}
\begin{lemma}[Relative-distinct functions $\textnormal{u}$ of a U-Net]
 Every U-Net has the capacity to implement $f^{dl}$ relative-distinct functions $\textnormal{u}$, but not more. 
 \label{lem:num_func}
\end{lemma}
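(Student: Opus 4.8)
The plan is to prove the two halves of the statement separately: an upper bound of $f^{dl}$ that holds for \emph{every} U-Net instance, and a matching lower bound realized by an explicit construction. For the upper bound I would first establish that every U-Net is periodic-$f^l$ shift equivariant, which I would do by tracking a shift through each layer. Valid convolutions and pointwise nonlinearities are fully shift equivariant, while strided max-pooling is only periodic-$f$ shift equivariant: the identity $P_f(T_{\Delta}(g)) = T_{\Delta/f}(P_f(g))$ holds precisely when $\Delta$ is a multiple of $f$. Composing the $l$ downsampling blocks, a shift commutes through the entire encoder down to the bottleneck only if it is a multiple of $f^l$ at the input; the upsampling blocks then carry a bottleneck shift of $k$ back up to an output shift of $k f^l$, and since skip connections concatenate tensors at matching resolutions they preserve this behavior. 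Reading the resulting equivariance $U(T_{\Delta\mathbf{x}}(I)) = T_{\Delta\mathbf{x}}(U(I))$ for $\Delta\mathbf{x} \in (f^l \mathbf{Z})^d$ at a single output location shows that $\textnormal{u}_{\mathbf{x}_1}$ and $\textnormal{u}_{\mathbf{x}_2}$ are relative-equal whenever $\mathbf{x}_1 - \mathbf{x}_2 \in (f^l \mathbf{Z})^d$, so the relative-equivalence classes are indexed by the residues of $\mathbf{x}$ modulo $f^l$ in each of the $d$ coordinates, giving at most $f^{dl}$ of them.

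For the lower bound I would construct a single U-Net instance whose output functions split into exactly $f^{dl}$ classes. The mechanism to exploit is that strided max-pooling is the only component that is not fully shift equivariant: the alignment of the pooling windows with the feature map depends on the input position modulo $f$, so a well-chosen feature pattern lets the pooled output at each level carry one base-$f$ digit of the position. I would design the convolution weights preceding each of the $l$ pooling layers so that, level by level, this positional digit is extracted and stored in a dedicated channel; after $l$ levels the bottleneck encodes the full position modulo $f^l$ per coordinate. Choosing the decoder weights to map these $f^{dl}$ distinct codes to $f^{dl}$ distinct output values then witnesses that all residue classes are relative-distinct, which also certifies that $f^l$ is the smallest period.

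I expect the lower-bound construction to be the main obstacle. The equivariance bookkeeping and the resulting upper bound are routine once the single-layer commutation rules are fixed. The delicate point is exhibiting concrete weights for which the phase is \emph{provably} extracted: strided max-pooling can easily destroy the very positional cue one is trying to read, since a symmetric within-window feature pattern yields the same maximum regardless of alignment, so the patterns must be chosen to break this symmetry, and one must verify that the extracted phase is neither overwritten by later convolutions nor smeared out along the upsampling path. I would address this by induction over the $l$ levels, maintaining the invariant that a designated channel certifies the position modulo $f^k$ after $k$ levels, and confirm distinctness by evaluating the constructed instance on suitably shifted copies of a single probe image.
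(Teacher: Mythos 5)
Your two-part decomposition --- a universal upper bound via periodic-$f^l$ shift equivariance, and a matching lower bound via an explicit instance-plus-image construction --- is exactly the paper's, and your upper-bound argument (commuting a shift layer by layer: full equivariance of valid convolutions and pointwise nonlinearities, periodic-$f$ behavior of strided pooling, proportional shifts through upsampling, preservation under skip concatenation) is essentially the paper's Appendix~\ref{app:proof} proof.

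Where you genuinely diverge is the lower bound, and there your proposal stops short of a proof at precisely the point you yourself flag as delicate. You propose engineering the encoder convolutions so that each pooling level extracts one base-$f$ digit of the input phase into a dedicated channel, maintained by induction over levels; but you never exhibit the weights, the symmetry-breaking feature pattern, or the verification that the phase survives the decoder, so the crux remains a plan rather than an argument. The paper's construction shows that this machinery is unnecessary: set all convolutions to identity, zero out the skip connections, and use fixed upsampling with all kernel weights equal to $1$, so that the entire network computes, at every output pixel, the maximum of the input over its enclosing $f^l$-block; then choose a single probe image whose intensities strictly increase along a diagonal (Cantor-style) ordering of positions. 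Every $f^l$-block of that image attains its maximum at its maximal corner, with a value distinct from the maximum of every other block, so the $f^{dl}$ shifted evaluations $u_{\mathbf{x}-\Delta\mathbf{x}}(T_{-\Delta\mathbf{x}}(I))$ for $\Delta\mathbf{x}\in\{0,\dots,f^l-1\}^d$ are pairwise distinct by inspection. In other words, the paper puts all of the symmetry-breaking into the probe image rather than into learned weights, which collapses your entire induction into a one-line verification; your ``digit-extraction'' invariant is, in effect, automatically satisfied by the monotone probe image under identity convolutions, so if you want to salvage your version, the cleanest completion is to discard the engineered channels and adopt that image directly.
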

\begin{proof}
Part I: For any U-Net, we construct an instance and an image $I$ with unique outputs $u_{\textbf{x}-\Delta\textbf{x}}(T_{-\Delta\textbf{x}}(I))$ for all $\Delta\textbf{x} \in \{0,.., f^l-1\}^d$, proving that every U-Net has the capacity to implement \emph{at least} $f^{dl}$ relative-distinct functions.
Part~II: We prove that every U-Net is equivariant to image shifts $f^l$, and hence \emph{no} U-Net has the capacity to implement \emph{more than} $f^{dl}$ relative-distinct functions. 
~\\
\noindent\textbf{Proof part I: }
We construct a U-Net instance and an input image $I$ which yields $f^{dl}$ relative-distinct output function values, as described in the following. 
Fig.\ \ref{fig:fl_functions_in_fixed_unet} shows a sketch of our construction for $d=1$.
First, for any given U-Net, construct an instance $\textnormal{U}$ with fixed upsampling (i.e.\ all upsampling kernel weights set to 1), all convolutions set to identity, and ignore skip connections by setting respective convolution kernel entries to $0$. For a $d$-dimensional input image I, this U-Net instance yields outputs 
$$
\textnormal{u}_{\mathbf{x}}(I) = \max \{ I(\floor{\mathbf{x}/f^l}\cdot f^l + \Delta\mathbf{x}) \ | \ \Delta\mathbf{x}\in \{ 0,...,f^l-1 \} ^d  \}.
$$
Second, construct a single-channel image $I$ such that $I(\mathbf{x})$ is strictly increasing for increasing positions $\mathbf{x}$  w.r.t.\ an ordering of positions along diagonals first by the sum of their components and second by their components in increasing order 
(cf.\ \cite{Cantor1877,rosenberg2003}):
\begin{equation}
\mathbf{x}_i > \mathbf{x}_j\iff
\begin{cases}
\sum\limits_{k} x_i^{(k)} > \sum\limits_{k} x_j^{(k)}
&\text{if} \ \sum\limits_{k} x_i^{(k)} \neq \sum\limits_{k} x_j^{(k)}\\
x_i^{(k)}> x_j^{(k)}
&\text{if} \ \sum\limits_{k} x_i^{(k)} = \sum\limits_{k} x_j^{(k)}\\
&\text{ and }x_i^{(l)}= x_j^{(l)} \,\, \forall l<k
\end{cases}
\end{equation} 
For this image, the maximum intensity in any image pixel block of edge length $f^l$ is found at the maximum position $(f^l-1, ..., f^l-1)^d$. Consequently, as each distinct pixel block of edge length $f^l$ covers a unique maximum position, 
\begin{equation}
\begin{aligned}
\forall \mathbf{x} \ \forall \Delta\mathbf{x}_i 
    \neq \Delta\mathbf{x}_j \in &\{ 0,...,f^l-1 \} ^d: \\  \textnormal{u}_{\mathbf{x}-\Delta\mathbf{x}_i}(T_{-\Delta \mathbf{x}_i}(I)) & \neq \textnormal{u}_{\mathbf{x}-\Delta\mathbf{x}_j}(T_{-\Delta \mathbf{x}_j}(I)), 
\end{aligned}
\end{equation}
i.e.\ the constructed U-Net instance implements $f^{dl}$ relative-distinct functions $\textnormal{u}$. 
\textbf{Proof part II: } See Suppl.~Sec.~2. 
\end{proof} 
\begin{corollary}[Periodic-$f^l$ Shift Equivariance of U-Nets]
Every U-Net has the capacity to be periodic-$f^l$ shift equivariant. 
\end{corollary}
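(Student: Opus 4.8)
The plan is to establish capacity by exhibiting one concrete U-Net instance that is periodic-$f^l$ shift equivariant, and the natural candidate is precisely the instance constructed in Part~I of Lemma~\ref{lem:num_func}. Unpacking the definition of periodic-$t$ shift equivariance, I need to verify two things for this instance: (i) it is equivariant to every shift in $f^l\mathbf{Z}^d := \{(z_1 f^l,\dots,z_d f^l)\mid z_i\in\mathbf{Z}\}$, and (ii) $f^l$ is the \emph{smallest} $t$ for which equivariance to all of $t\mathbf{Z}^d$ holds. Claim~(i) comes for free from Part~II of Lemma~\ref{lem:num_func}, which asserts that \emph{every} U-Net, and hence this particular instance, is equivariant to shifts by $f^l$; equivariance to all integer multiples then follows since the equivariant shifts are closed under composition.

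For the minimality in~(ii) I would first record the elementary dictionary between single-shift equivariance and relative-equality: evaluating $\textnormal{F}(T_{\mathbf{s}}(I))=T_{\mathbf{s}}(\textnormal{F}(I))$ at position $\mathbf{x}$ and using $T_{\Delta\mathbf{x}}(J)(\mathbf{x})=J(\mathbf{x}-\Delta\mathbf{x})$ shows that equivariance to a shift $\mathbf{s}$ is equivalent to $\textnormal{u}_{\mathbf{x}}(T_{\mathbf{s}}(I))=\textnormal{u}_{\mathbf{x}-\mathbf{s}}(I)$ for all $\mathbf{x},I$, which by the definition of relative-equality (with $\mathbf{x}_1=\mathbf{x}-\mathbf{s}$, $\mathbf{x}_2=\mathbf{x}$) says exactly that $\textnormal{u}_{\mathbf{x}-\mathbf{s}}$ and $\textnormal{u}_{\mathbf{x}}$ are relative-equal. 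Consequently, equivariance to all of $t\mathbf{Z}^d$ forces any two functions whose positions agree modulo $t$ in every coordinate to be relative-equal. Now suppose, for contradiction, that the Part~I instance were equivariant to $t\mathbf{Z}^d$ for some $0<t<f^l$. Applying the dictionary to the single-coordinate shift $\mathbf{s}=(t,0,\dots,0)\in t\mathbf{Z}^d$ makes the two functions indexed by $\Delta\mathbf{x}=(0,\dots,0)$ and $\Delta\mathbf{x}=(t,0,\dots,0)$ relative-equal; both offsets lie in the block $\{0,\dots,f^l-1\}^d$ precisely because $0<t<f^l$. But Part~I exhibits an image for which all $f^{dl}$ functions indexed by this block are pairwise relative-distinct, a contradiction. (Equivalently and more quantitatively, equivariance to $t\mathbf{Z}^d$ admits at most $t^d$ relative-equality classes among all positions, whereas Part~I produces $f^{dl}$ pairwise relative-distinct functions, forcing $t^d\geq f^{dl}$ and hence $t\geq f^l$.) Thus no $t<f^l$ works, $f^l$ is minimal, and this instance is periodic-$f^l$ shift equivariant, which proves the capacity claim.

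The only genuinely delicate step is the equivariance-to-relative-equality dictionary, and in particular checking that the two positions witnessing the contradiction stay inside the block $\{0,\dots,f^l-1\}^d$ on which Part~I guarantees pairwise distinctness. Once this translation is set up, the remainder is bookkeeping, and I expect no substantive obstacle beyond keeping the shift-composition identity $T_{\mathbf{a}}\circ T_{\mathbf{b}}=T_{\mathbf{a}+\mathbf{b}}$ straight when moving between the two equality notions.
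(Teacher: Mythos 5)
Your proposal is correct and takes essentially the same approach as the paper: the paper's proof likewise combines Part~II of Lemma~\ref{lem:num_func} (every U-Net, hence the Part~I instance, is equivariant to shifts $f^l$) with Part~I (that instance is non-equivariant to any shift $<f^l$). Your explicit dictionary between single-shift equivariance and relative-equality merely spells out the translation the paper leaves implicit.
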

\begin{proof}
Directly follows from the proof of Lemma~\ref{lem:num_func}, which shows in Part I that every U-Net has the capacity to be non-equivariant to any shifts $<f^l$, and in Part II that every U-Net is shift equivariant to shifts $f^l$.
\end{proof}
\subsection{Tile-and-stitch mode}
\label{sec:tile-and-stitch}
\begin{figure*}[ht]
    \centering
    \begin{subfigure}[b]{0.98\textwidth}
    \includegraphics[width=\textwidth]{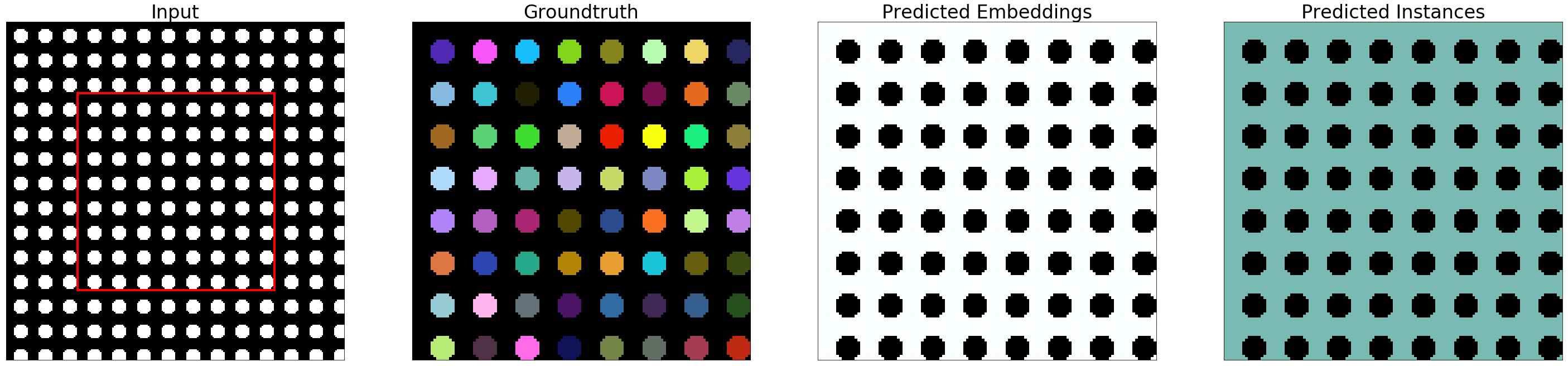}
    \caption{Object spacing $16$ pixels, i.e.\ a multiple of $f^l=8$. Learnt upsampling.}
    \label{fig:max_num_inst_a}
    \end{subfigure}
    \begin{subfigure}[b]{0.49\textwidth}
    \includegraphics[trim=1446 0 -31 0, clip,width=\textwidth]{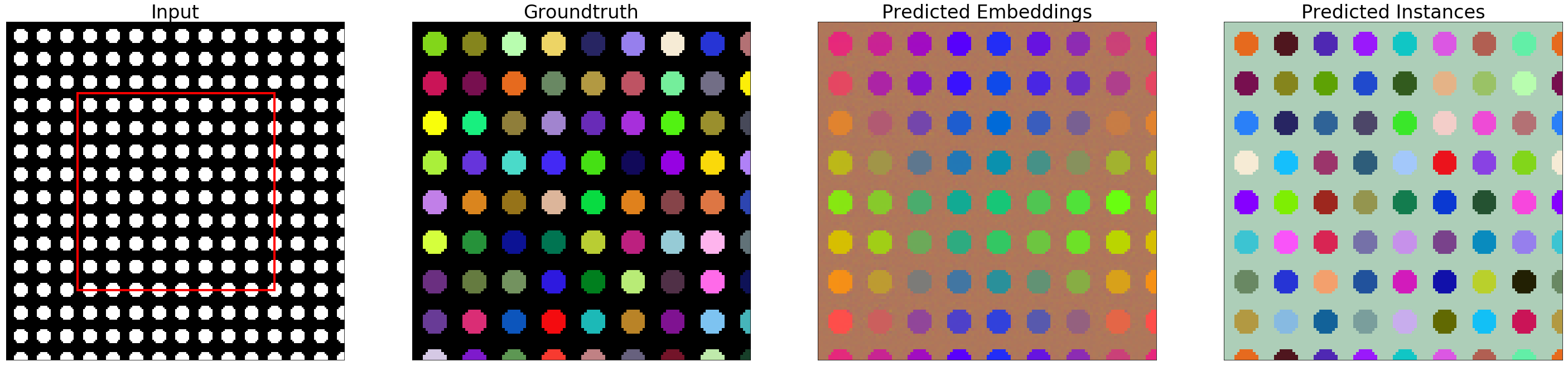}
    \caption{\centering Object spacing $15$ pixels, co-prime with $f^l=8$. Learnt upsampling.}
    \end{subfigure}
    \begin{subfigure}[b]{0.49\textwidth}
    \includegraphics[trim=1405 0 0 0, clip,width=\textwidth]{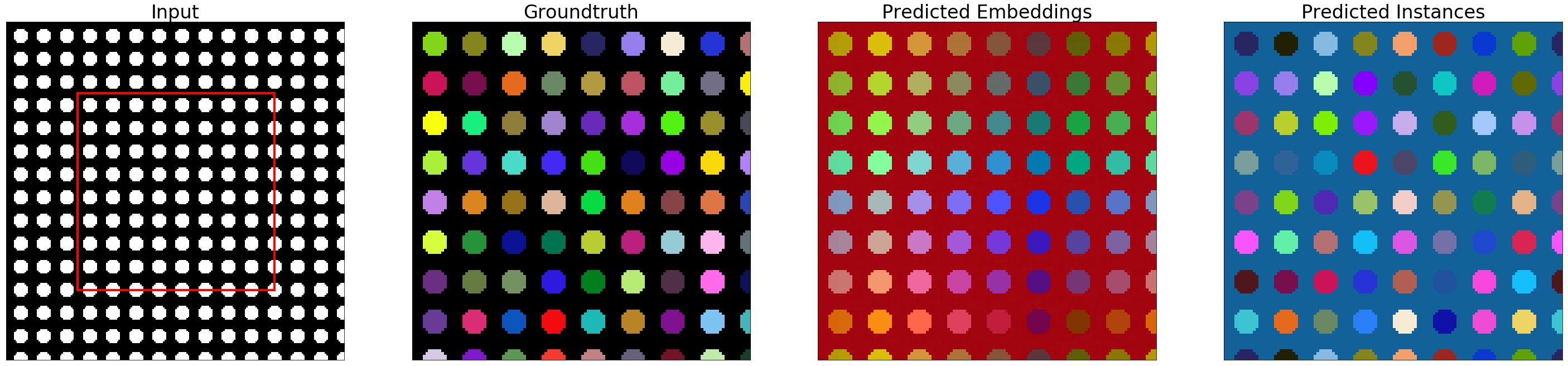}
    \caption{\centering Object spacing $15$ pixels, co-prime with $f^l=8$. Fixed upsampling.}
    \end{subfigure}
    \caption{A U-Net with $l$ pooling layers and pooling factor $f$ cannot distinguish any instances in an $f^l$-periodic $d$-dimensional image of same-looking instances (a). However, it can distinguish up to $f^{d\cdot l}$ instances in a $p$-periodic image of same-looking instances for $p$, $f^l$ co-prime (b,c). Showcase: $l=3, f=2, f^l=8, f^{dl}=64$. The red box in the input image (top left) shows the valid output window.  Analogous results can be achieved for the same object spacings and $l=4, f=2$ (not shown).
    }
    \label{fig:max_num_inst}
\end{figure*}
In practice, to deal with limited GPU memory, a U-Net is commonly trained on fixed-size input image tiles, yielding fixed-size output tiles. 
At test time, the output for a full input image is then obtained in a tile-and-stitch manner, where it is common to employ the same tile size as during training, yet larger tile sizes are sometimes employed as inference is less memory-demanding than training. 

Concerning shift equivariance in a tile-and-stitch approach with output tile size $w$ during inference, we get (1) periodic-$f^l$ shift equivariance within output tiles, and (2) trivially, periodic-$w$ shift equivariance across output tiles. Periodic-$f^l$ shift equivariance across the whole output only holds if $w$ is a multiple of $f^l$.
\subsection{Non-valid padding}
\label{sec:same-padding}
The concept of shift equivariance runs counter to the concept of non-valid padding, as the latter does not allow for ``clean" input image shifts: Shifting+padding, in general, changes the input image beyond the shift. 
As a notable consequence, e.g., zero padding renders a CNN with sufficiently large receptive field location-aware~\cite{kayhan2020translation,alsallakh2020mind}, thus eviscerating shift equivariance. See \cite{kayhan2020translation} for an in-depth discussion of zero-padding and other padding schemes. 
\section{Analysis of the Impact on Metric Learning for Instance Segmentation}
We assess implications of a U-Net's periodic-$f^l$ shift equivariance on the application of instance segmentation via metric learning with discriminative loss~\cite{de2017semantic}. The respective loss function has three terms, a pull-force that pulls pixel embeddings towards their respective instance centroid, a push force that pushes centroids apart, and a penalty on embedding vector lengths. Given predicted embeddings, instances are inferred by mean-shift clustering. For more details, see~\cite{de2017semantic}.
First, we assess how many ``same-looking" instances a U-Net trained with discriminative loss can distinguish. We call two instances ``same-looking" iff the image itself is invariant to shifting by the offset between instance center points. 
Second, we show the necessity to follow a concise set of simple rules to avoid inconsistencies in a tile-and-stitch approach. 

\subsection{Distinguishing Same-looking Instances (thus Avoiding False Merges)}
\begin{corollary}A U-Net has the capacity to distinguish at most $f^{dl}$ same-looking instances.
\label{cor:max-distinguished-instances}
\end{corollary}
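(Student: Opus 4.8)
The plan is to reduce the claim to Lemma~\ref{lem:num_func} by translating the metric-learning notion of \emph{distinguishing} instances into the language of relative-distinct functions $\textnormal{u}$. First I would fix an image $I$ containing $k$ pairwise same-looking instances with center points $\mathbf{x}_1,\dots,\mathbf{x}_k$, so that by definition the image is invariant under every pairwise offset, i.e.\ $T_{\mathbf{x}_j-\mathbf{x}_i}(I)=I$ for all $i,j$. The key observation is that separating two instances via mean-shift clustering of the predicted embeddings requires these embeddings to genuinely differ; in particular it requires the embeddings at corresponding (center) positions to differ, $\textnormal{u}_{\mathbf{x}_i}(I)\neq\textnormal{u}_{\mathbf{x}_j}(I)$.

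Next I would show that this is impossible whenever $\textnormal{u}_{\mathbf{x}_i}$ and $\textnormal{u}_{\mathbf{x}_j}$ are relative-equal. Indeed, combining relative-equality with the same-looking assumption gives $\textnormal{u}_{\mathbf{x}_i}(I)=\textnormal{u}_{\mathbf{x}_j}(T_{\mathbf{x}_j-\mathbf{x}_i}(I))=\textnormal{u}_{\mathbf{x}_j}(I)$, so the two center embeddings coincide. Applying the same argument to every pair of corresponding pixels inside the two instances shows that the entire embedding pattern of instance $i$ equals that of instance $j$, so no clustering can assign them to different clusters. Hence every pair of distinguishable same-looking instances must correspond to a pair of relative-distinct functions $\textnormal{u}$.

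The conclusion then follows directly from Lemma~\ref{lem:num_func}: since any U-Net instance implements at most $f^{dl}$ relative-distinct functions $\textnormal{u}$, the center functions $\textnormal{u}_{\mathbf{x}_1},\dots,\textnormal{u}_{\mathbf{x}_k}$ can be pairwise relative-distinct for at most $f^{dl}$ values, forcing $k\leq f^{dl}$. Concretely, by periodic-$f^l$ shift equivariance two centers yield relative-equal functions whenever their offset is a multiple of $f^l$ in every coordinate, so distinguishability forces the centers into pairwise distinct residue classes modulo $f^l$ per dimension, of which there are exactly $f^{dl}$.

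The main obstacle I anticipate is the faithful formalization of the first step: arguing rigorously that the discriminative-loss-plus-mean-shift pipeline can only separate instances whose embeddings actually differ, and that relative-equality together with the same-looking property forces not merely equal center embeddings but identical embeddings across the two whole instances. Once that reduction is secured, the counting is immediate from the Lemma.
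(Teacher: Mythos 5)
Your proposal is correct and takes essentially the same route as the paper: it reduces the claim to Lemma~\ref{lem:num_func} by observing that for same-looking instances (so $T_{\mathbf{x}_j-\mathbf{x}_i}(I)=I$) relative-equal functions $\textnormal{u}$ at corresponding positions force equal embeddings, hence distinguishable instances require pairwise relative-distinct center functions, of which there are at most $f^{dl}$ (one per residue class modulo $f^l$ in each of the $d$ dimensions). Your additional step showing that relative-equality propagates to all corresponding pixels, so whole embedding patterns coincide and no clustering can separate them, is a slightly more careful formalization of what the paper's terse proof leaves implicit.
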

\begin{proof}
Lemma~\ref{lem:num_func} entails that a U-Net can assign at most $f^{dl}$ different embeddings to a representative pixel of an object instance (say the ``central pixel"), namely when positioned at the $f^{dl}$ different relative locations w.r.t.\ its max-pooling regions. This holds true iff same-looking  instances are located at offsets $p$ with $p, f^l$ co-prime. 
\end{proof}
\noindent Whether a U-Net is also able to assign same embeddings to all pixels within any instance, thus yielding $f^{dl}$ correct segments, is up to its capacity and the success of training. 
\begin{corollary}A U-Net cannot distinguish same-looking instances located at offsets $n\cdot{f^l}, n\in\mathbf{N}$.
\label{cor:not-distinguished-instances}
\end{corollary}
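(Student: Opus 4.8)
The plan is to obtain the claim as an almost immediate consequence of the periodic-$f^l$ shift equivariance established in the preceding Corollary, combined with the definition of "same-looking." First I would formalize the hypothesis. Let the two same-looking instances be separated by an offset $\Delta\mathbf{x} = n\cdot f^l$, understood as a shift lying in the periodicity lattice $\{(z_1 f^l, \ldots, z_d f^l)\mid z_i\in\mathbf{Z}\}$. By the definition of same-looking, the image is invariant to the offset between the instance centers, so $T_{\Delta\mathbf{x}}(I) = I$. Since every U-Net has the capacity to be, and (by Part~II of Lemma~\ref{lem:num_func}) always is, equivariant to shifts in this lattice, we have $\textnormal{U}(T_{\Delta\mathbf{x}}(I)) = T_{\Delta\mathbf{x}}(\textnormal{U}(I))$.

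Next I would evaluate this identity at the output location $\mathbf{x}+\Delta\mathbf{x}$, where $\mathbf{x}$ is an arbitrary pixel of the first instance and $\mathbf{x}+\Delta\mathbf{x}$ the corresponding pixel (same relative position) of the second. Unwinding the shift operator on the right-hand side gives $T_{\Delta\mathbf{x}}(\textnormal{U}(I))(\mathbf{x}+\Delta\mathbf{x}) = \textnormal{U}(I)(\mathbf{x}) = \textnormal{u}_{\mathbf{x}}(I)$, while substituting the invariance $T_{\Delta\mathbf{x}}(I)=I$ into the left-hand side gives $\textnormal{U}(T_{\Delta\mathbf{x}}(I))(\mathbf{x}+\Delta\mathbf{x}) = \textnormal{U}(I)(\mathbf{x}+\Delta\mathbf{x}) = \textnormal{u}_{\mathbf{x}+\Delta\mathbf{x}}(I)$. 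Equating the two yields $\textnormal{u}_{\mathbf{x}+\Delta\mathbf{x}}(I) = \textnormal{u}_{\mathbf{x}}(I)$; equivalently, $\textnormal{u}_{\mathbf{x}}$ and $\textnormal{u}_{\mathbf{x}+\Delta\mathbf{x}}$ are relative-equal, and the shift-invariance of $I$ upgrades this to absolute equality of the embeddings at these two locations.

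Finally I would conclude indistinguishability. Because the equality $\textnormal{u}_{\mathbf{x}+\Delta\mathbf{x}}(I) = \textnormal{u}_{\mathbf{x}}(I)$ holds for every pixel $\mathbf{x}$ of the first instance, the two instances induce pointwise-identical embedding maps (one being the $\Delta\mathbf{x}$-shift of the other). Hence their embedding point sets coincide, and any downstream assignment that is a function of the predicted embeddings alone, e.g.\ the mean-shift clustering used with the discriminative loss, must treat the two instances identically and therefore cannot separate them. This is precisely the complementary case to Corollary~\ref{cor:max-distinguished-instances}: offsets co-prime to $f^l$ place the representative pixels at distinct relative positions within their pooling regions and are distinguishable, whereas offsets that are multiples of $f^l$ are not.

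The main obstacle I expect is not the algebra, which is a one-line consequence of equivariance plus invariance, but stating the boundary conditions cleanly: the invariance $T_{\Delta\mathbf{x}}(I)=I$ can hold exactly only for content sufficiently interior to the finite input image, and "corresponding pixel" must be defined so that both $\mathbf{x}$ and $\mathbf{x}+\Delta\mathbf{x}$ are valid output locations whose full receptive fields lie inside the image. I would therefore phrase the statement for instances whose receptive-field support is contained in the image (or assume the image is large enough, respectively periodic), so that the equivariance identity applies verbatim and the conclusion follows without edge artifacts.
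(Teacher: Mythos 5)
Your proposal is correct and takes essentially the same approach as the paper: the paper's one-line proof invokes periodic-$f^l$ shift equivariance to conclude that the U-Net necessarily assigns the same embeddings to pixels at the same relative locations in the objects, which is exactly the argument you spell out in detail via the equivariance identity combined with the image's invariance under the offset. Your added remarks on boundary conditions and downstream clustering are sound elaborations rather than a different route.
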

\begin{proof}
Periodic-$f^l$ shift equivariance of the U-Net entails that it necessarily assigns same embeddings to pixels at same relative locations in the objects. \end{proof}
\noindent Fig.\ \ref{fig:max_num_inst} showcases Corollaries~\ref{cor:max-distinguished-instances} and~\ref{cor:not-distinguished-instances} on  images of periodically arranged disks, for which we trained U-Nets with discriminative loss to predict embeddings $\in \mathbf{R}^3$. In particular, it shows that the upper bound of separating $f^{dl}$ same-looking instances, as stated in Corollary~\ref{cor:max-distinguished-instances}, can be reached. 

\subsection{Avoiding False Split Errors in Tile-and-Stitch}\label{sec:Avoiding_False_Splits}
\begin{figure*}[t!]
    \centering
    \begin{subfigure}[b]{0.9\linewidth}
    \includegraphics[trim=32 262 398 80, clip,width=\linewidth]{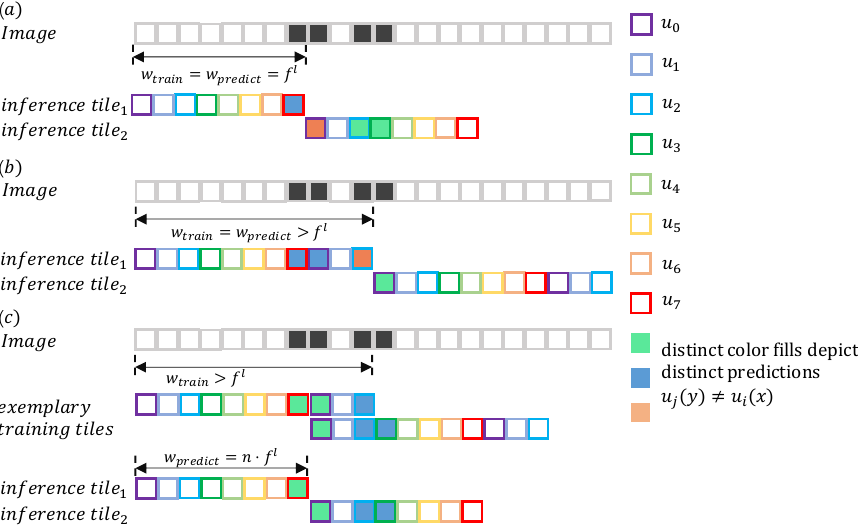}
    \end{subfigure}
    \caption{Stitching errors occur when two relative-distinct output functions $u_i$ and $u_j$ are adjacent to each other during inference, but not during training. Shown here is a 1-d sketch with $l=3$, $f=2$, i.e.\ with $f^l=8$ relative-distinct output functions $u_i$, and an exemplary input image containing two instances shown as black filled pixels (where each instance is two pixels wide). In (a) $u_7$ is adjacent to $u_0$ at the stitching boundary during inference, but during training they were never adjacent due to training output tile size $f^l$. This is fixed in (b) where the training output tile size is $> f^l$, but during inference $u_{2}$ is adjacent to $u_0$ which never occurred during training. In (c) the U-Net was trained as in (b) with training output tile size $> f^l$; During inference, however, output tiles are cropped to $n\cdot f^l$ to ensure that only functions that were adjacent to each other during training are adjacent at tile boundaries, thus allowing to overcome inconsistencies.
   }
    \label{fig:avoiding_false_splits}
\end{figure*}
\begin{figure*}[t!]
    \centering
    \begin{subfigure}[b]{0.98\textwidth}
    \includegraphics[width=\textwidth]{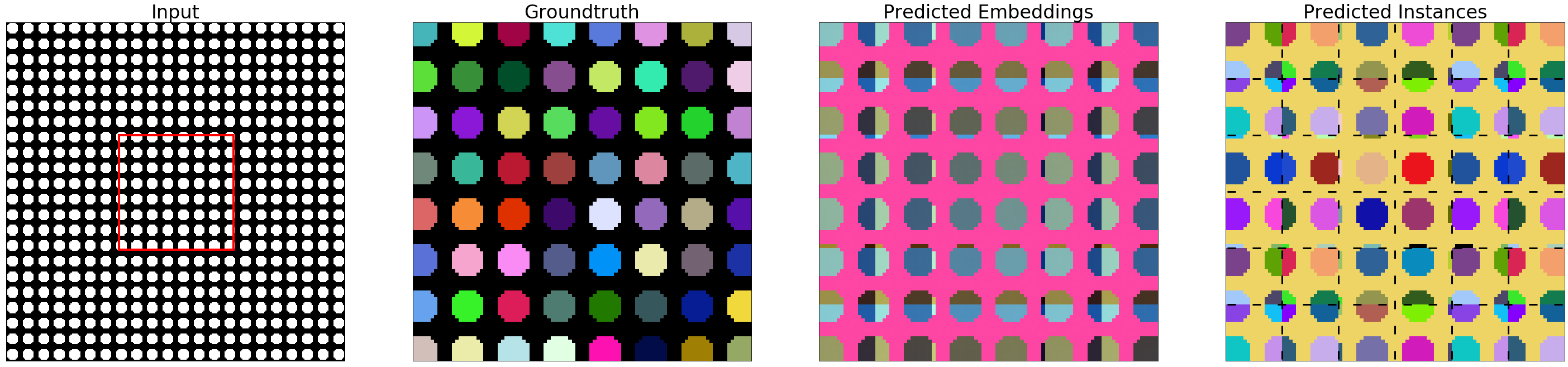}
    \caption{Output tile size $16 (=f^l)$}
    \label{fig:inconsistencies_a}
    \end{subfigure}
    \begin{subfigure}[b]{0.49\textwidth}
    \includegraphics[trim=1446 0 -31 0, clip,width=\textwidth]{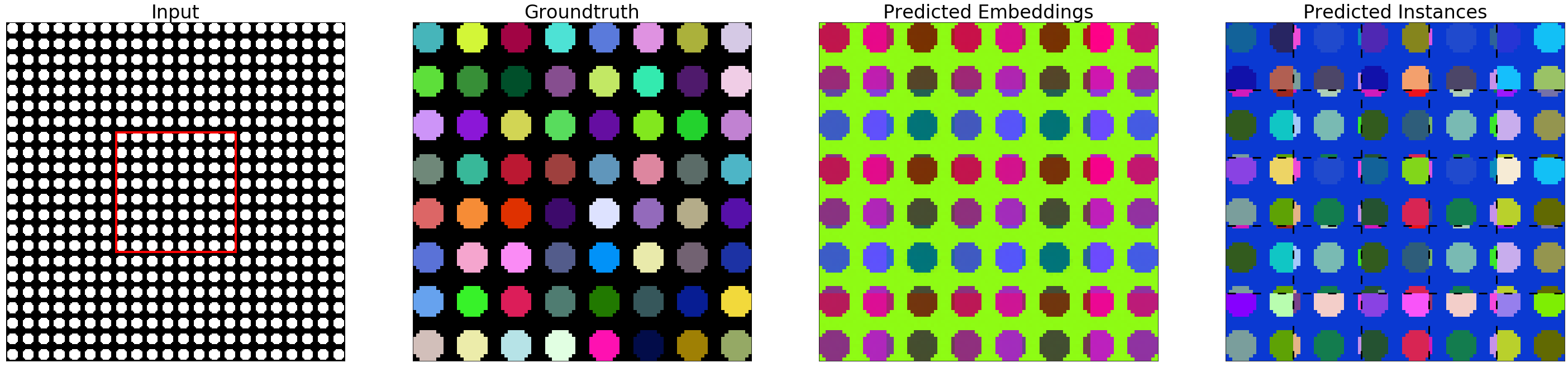}
    \centering
    \caption{\centering Output tile size $20 (>f^l)$, 
    not cropped before stitching}
    \label{fig:inconsistencies_b}
    \end{subfigure}
    \begin{subfigure}[b]{0.49\textwidth}
    \includegraphics[trim=1425 0 0 0, clip,width=\textwidth]{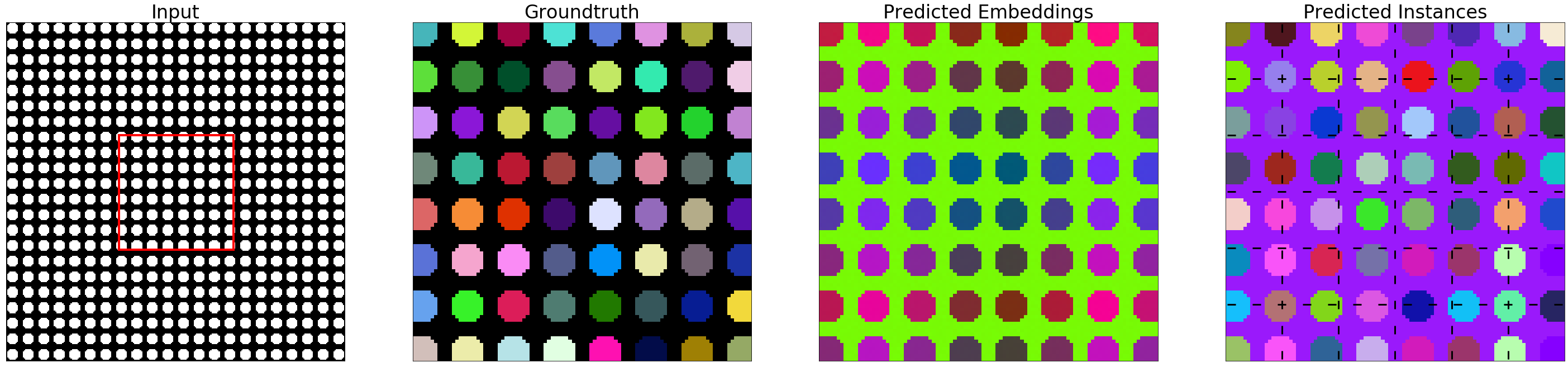}
    \centering
    \caption{\centering Output tile size 
    cropped to $16(=f^l)$ before stitching}
    \label{fig:inconsistencies_c}
    \end{subfigure}
    \centering
    \begin{subfigure}[b]{0.49\textwidth}
    \includegraphics[trim=0 0 1446 0, clip,width=\linewidth]{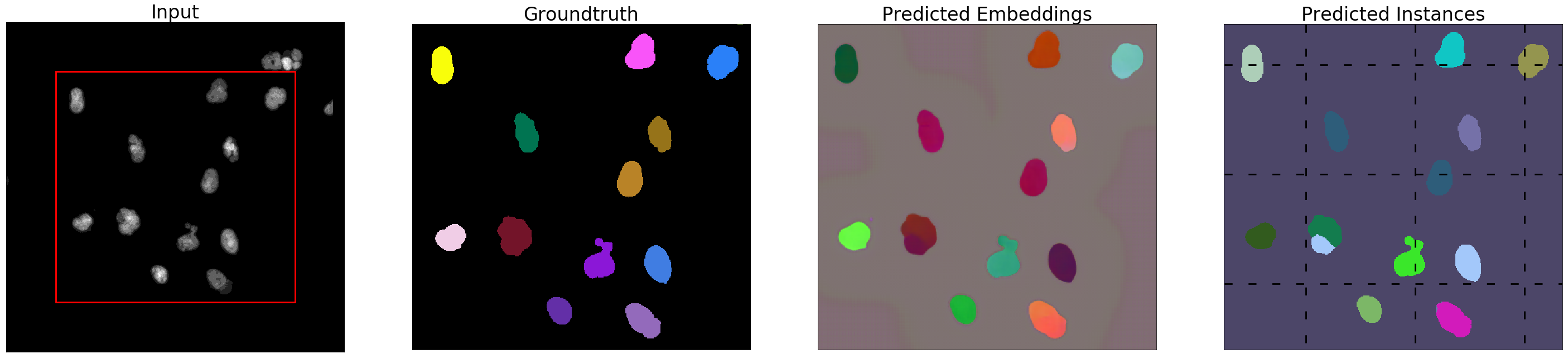}
    \centering
    \caption{\centering Sample \emph{mcf-z-stacks-03212011\_f22\_s} from the BBBC006 cell nuclei  dataset~\cite{ljosa2012annotated}}
    \label{fig:inconsistencies_d}
    \end{subfigure}
    \begin{subfigure}[b]{0.245\textwidth}
    \includegraphics[trim=2169 0 -31 0, clip,width=\linewidth]{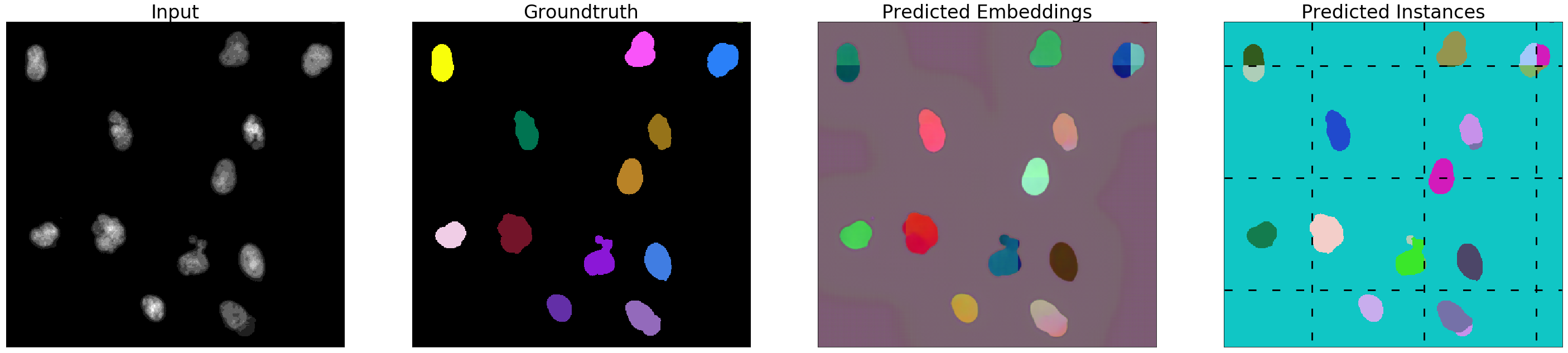}
    \centering
    \caption{\centering Output tile size $148 (>f^l)$, 
    not cropped before stitching}
    \label{fig:inconsistencies_e}
    \end{subfigure}
    \begin{subfigure}[b]{0.245\textwidth}
    \includegraphics[trim=2169 0 -31 0, clip,width=\linewidth]{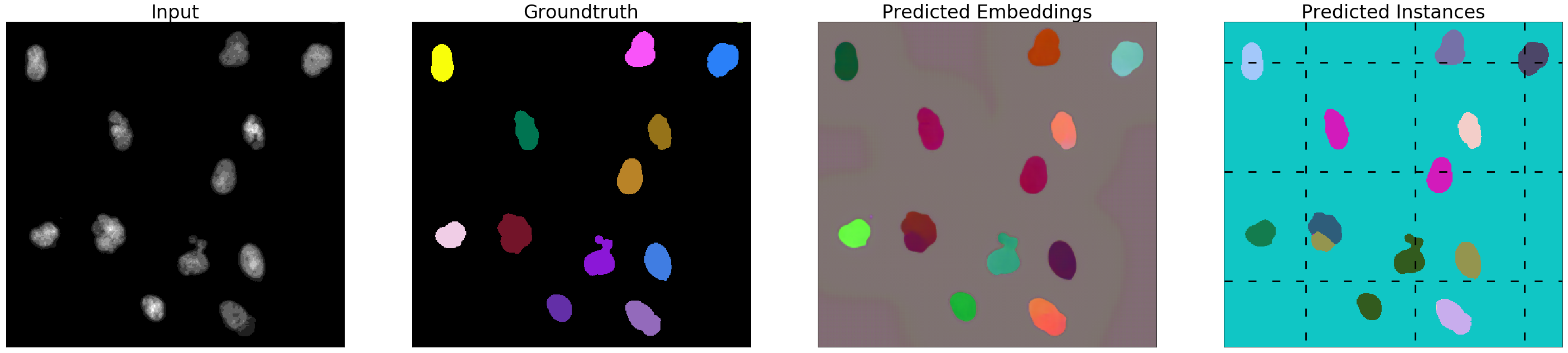}
    \centering
    \caption{\centering Output tile size 
    cropped to $144(=n \cdot f^l)$ before stitching}
    \label{fig:inconsistencies_f}
    \end{subfigure}
    \caption{Stitching issues, and how to fix them, for a U-Net with $l=4$ and $f=2$, and a $p$-periodic input image with $p, f^l$ co-prime. (a) Training with output window size $w=f^l$ yields inconsistencies at $f^l$-grid boundaries (black dashed lines) in larger outputs. To avoid inconsistencies, not only is it necessary to (b) train with $w>f^l$, which still yields inconsistencies at stitching boundaries when naively stitching $w$-sized tiles, but also to (c) crop tiles to size $n\cdot f^l$ before stitching, which solves the issue. (d-f) The same effect occurs on real data: (d) Excerpt from BBBC006 cell nuclei dataset~\cite{ljosa2012annotated}: Naive stitching (e) yields false split errors at tile boundaries, while correct stitching (f) fixes them. (See  Suppl.~Fig.~1 for resp.\ embeddings.) Note that slight differences between (e) and (f) \emph{within} tiles occur because predictions stem from different U-Net output functions.
    }
    \label{fig:inconsistencies}
\end{figure*}
In the following, we analyze the impact of output tile size on training with discriminative loss, as well as on inference in a tile-and-stitch manner. To this end, we assess which of the $f^{dl}$ potentially relative-distinct output functions of a U-Net contribute to the loss, and which pairs of functions that predict directly neighboring outputs in a stitched solution contribute to an instance's pull force loss term during training. 
Fig.~\ref{fig:avoiding_false_splits} exemplifies our analysis on a 1-d input image that contains a couple of two-pixel-wide instances. 

\noindent\textbf{Training output tile size $<f^l$: } In this case, some of the $f^{dl}$ output functions of the U-Net never contribute to the loss, i.e.\ they are not explicitly trained. In effect, they may yield nonsensical predictions when used during inference. 

\noindent\textbf{Training output tile size $=f^l$: } In this case, all output functions of the U-Net are considered during training in each batch. However, some pairs of functions that predict neighboring outputs during inference are never considered as neighbors during training. E.g.\ for $d=1$, $\textnormal{u}_0$ and $\textnormal{u}_{f^l-1}$ never predict directly neighboring embeddings during training, and hence never contribute to the pull force loss term as direct neighbors. They do, however, predict directly neighboring embeddings during inference, no matter if stitching $f^l$-sized output tiles or employing larger output tiles (potentially alleviating the need for stitching) during inference. Consequently, in this case, embeddings predicted at neighboring pixels at $f^l$-grid-boundaries may be inconsistent. 

\noindent\textbf{Training output tile size $>\!f^l$: } All possible direct neighborhoods of output functions are considered during training, given that inference output tile size is a multiple of $f^l$.

\noindent\textbf{Inference output tile size $\neq n\cdot f^l$: } Similar to the case of training output tile size $=f^l$, functions that predict neighboring outputs on two sides of a stitching boundary have never contributed to the same pull force term as neighbors during training (assuming batch size 1). Consequently, inconsistencies may occur at stitching boundaries. 

\noindent\textbf{Inference output tile size $= n\cdot f^l$: } Tile-and-stitch processing is guaranteed to not be causal for any inconsistencies, as formalized by the following Corollary: %
\begin{corollary}
If valid padding and output tiles of size $n\cdot f^l$ are employed, tile-and-stitch is equivalent to processing whole images at once.
\label{cor:tile_and_stitch_equiv}
\end{corollary}
%
\begin{proof}
This directly follows from  identical arrangements of respective output functions, 
namely arrangement into a regular grid of d-dimensional blocks of size $f^{dl}$.
\label{proof:tile_and_stitch_equiv}
\end{proof}

\noindent\textbf{Zero padding: }
\begin{figure*}[t!]
    \centering
    \begin{subfigure}[b]{0.98\textwidth}
    \includegraphics[width=\textwidth]{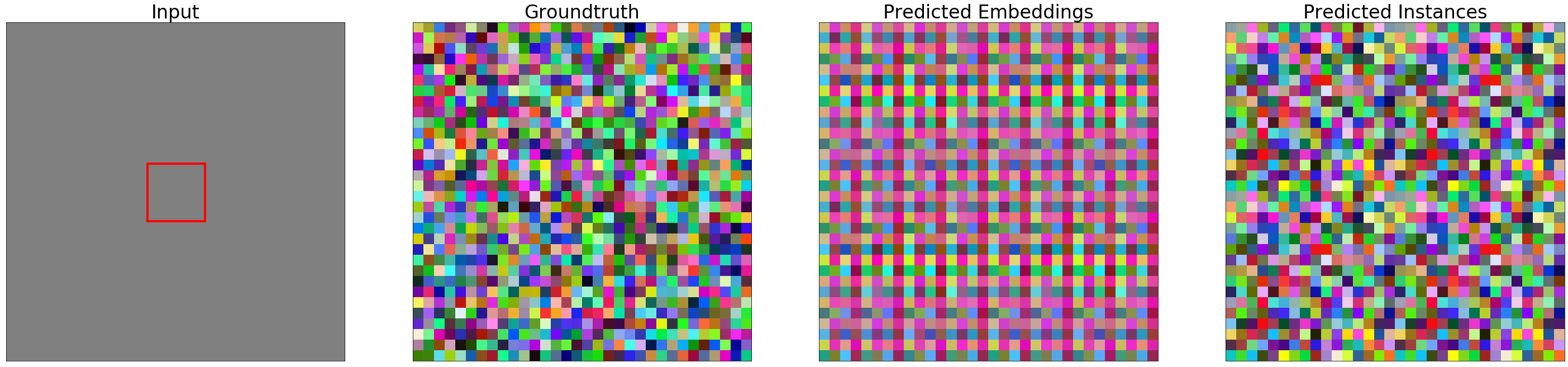}
    \end{subfigure}
    \caption{A U-Net with valid padding and learnt upsampling can learn to assign a unique ID to each pixel in an output window of size $f^l$, independent of the input image. This is not possible with nearest-neighbor upsampling. Showcase: $l=4, f=2$, input image $I\equiv 1$. Output: repeating pattern of $f^{dl}=256$ unique IDs.
   }
    \label{fig:pixel_IDs}
\end{figure*}
A U-Net with zero padding, training output window size $w$, and sufficiently large receptive field implements up to $w^d$ relative-distinct functions~\cite{kayhan2020translation}. Assuming batch size 1, this yields inconsistencies at stitching boundaries analogous to the valid-padding cases discussed above. Related work has attributed this effect to zero padding~\cite{huang2018tiling,reina2020systematic}, yet to our knowledge, mitigation has been limited to using larger tiles during inference~\cite{huang2018tiling,reina2020systematic}. Valid padding has been investigated as a potential remedy~\cite{reina2020systematic}, yet to no avail due to a lack of formal analysis. 

\noindent\textbf{Necessary rules to avoid inconsistencies at stitching boundaries: } 
Following from the above considerations, in general, to avoid inconsistencies in a tile-and-stitch approach, at training time, assuming standard batch size 1 and valid convolutions, it is necessary to train with output window size $>f^l$. 
Furthermore, at test time, it is necessary to crop output tiles of size $\neq n\cdot f^l$ to some $n\cdot f^l$ before stitching ($n\geq 1$).
Fig.~\ref{fig:inconsistencies_a}-\ref{fig:inconsistencies_c} showcases the necessity of following the rules on synthetic images of periodically arranged disks. 

For the case of zero padding, training with batch size $>1$ is necessary to avoid inconsistencies, where training output tiles in a batch have to be directly neighboring. 
Note, however, that batch size $>1$ is uncommon due to GPU memory limitations, and hence may entail further architectural changes to be feasible. 

\subsection{Location awareness}
\begin{corollary}
A U-Net with valid padding and learnt upsampling has the capacity to assign a unique ID to each pixel in an output window of size $f^{dl}$, independent of the specific input image.
\end{corollary}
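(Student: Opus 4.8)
The plan is to exhibit a single U-Net instance (with valid padding and learnt upsampling) whose output at every location of an $f^l\times\cdots\times f^l$ window, i.e.\ at all $f^{dl}$ pixels, is a fixed value that does not depend on the input image, with the $f^{dl}$ values pairwise distinct. Since ``capacity'' only requires one such instance, an explicit construction suffices. The guiding idea is to make the decoder emit a positional encoding in base $f$: writing the in-window offset as $\mathbf{x}=(x_1,\dots,x_d)$ with each coordinate $x_k\in\{0,\dots,f^l-1\}$ expanded as $x_k=\sum_{i=0}^{l-1}a^{(k)}_i f^i$, I will arrange that the network reconstructs each $x_k\bmod f^l$ in its own output channel, so that the output vector $(x_1\bmod f^l,\dots,x_d\bmod f^l)$ is a unique ID within any aligned window.

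First I would remove all dependence on the input by setting every ordinary convolution kernel in both encoder and decoder to zero and ignoring skip connections, so that the only surviving signal comes from biases. Using biases I make the bottleneck feature map constant and input-independent, reserving one \emph{source} channel $A$ held at the constant value $1$ and $d$ \emph{accumulator} channels $B_1,\dots,B_d$ initialized to $0$. Because fixed (nearest-neighbor) upsampling maps a constant to a constant and could therefore never introduce spatial structure from a constant bottleneck, this is exactly where \emph{learnt} upsampling becomes indispensable.

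Next I would exploit the transposed convolutions. A stride-$f$, non-overlapping transposed convolution assigns to each relative sub-position $\mathbf{j}\in\{0,\dots,f-1\}^d$ its own kernel weights, and since channel $A\equiv1$ feeds it, a weight $W[B_k,A,\mathbf{j}]$ deposits the position-dependent value $W[B_k,A,\mathbf{j}]$ (rather than $0$) at sub-position $\mathbf{j}$. Indexing the $l$ upsampling layers $m=1,\dots,l$ from the bottleneck outward, the split performed by layer $m$ selects digit $a^{(k)}_{l-m}$ of coordinate $k$. Setting $W[A,A,\cdot]=1$ to preserve the constant, $W[B_k,B_k,\cdot]=1$ to copy each partial sum to all children, and $W[B_k,A,\mathbf{j}]=f^{\,l-m}\,j_k$ makes layer $m$ add $f^{\,l-m}a^{(k)}_{l-m}$ to channel $B_k$. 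Summing over $m$ telescopes to $\sum_{i=0}^{l-1}f^{i}a^{(k)}_i=x_k\bmod f^l$, yielding the per-coordinate encoding; the intervening conv+nonlinearity layers are set to pass the channels through unchanged, which is consistent with a ReLU since all accumulated values are non-negative. Valid padding only crops the spatial extent and does not disturb this $f^l$-periodic pattern.

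I expect the main obstacle to be the bookkeeping that keeps the encoding genuinely input-independent while still position-discriminating: position dependence can enter \emph{only} through the sub-position index of the transposed-convolution kernel, whereas a convolution bias is broadcast uniformly and carries no spatial information. The construction therefore hinges on routing the constant source channel $A$ into the accumulators through sub-position-dependent weights, and on verifying that the digit-to-layer correspondence together with the weights $f^{\,l-m}j_k$ gives an injective map from $(a^{(k)}_0,\dots,a^{(k)}_{l-1})$ to distinct reals, so that the $d$ accumulator channels jointly distinguish all $f^{dl}$ window positions. Establishing this injectivity is the crux; the remaining verifications are routine.
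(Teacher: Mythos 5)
Your construction is correct, and it shares the paper's overall skeleton -- kill input dependence so the bottleneck is constant, then exploit the fact that a stride-$f$ transposed convolution applies a \emph{different} kernel weight at each of the $f^d$ sub-positions -- but the encoding gadget is genuinely different. The paper uses a single channel and makes the ID \emph{multiplicative}: it fills the $l$ upsampling kernels with $l\cdot f^d$ non-repeating primes, so each output pixel is a product of $l$ distinct primes, and uniqueness of IDs follows in one line from uniqueness of prime factorization. You instead make the ID \emph{additive and vector-valued}: a constant source channel $A\equiv 1$ feeds accumulator channels $B_1,\dots,B_d$ through sub-position-dependent weights $f^{\,l-m}j_k$, so that channel $B_k$ telescopes to the base-$f$ expansion $\sum_i a^{(k)}_i f^i = x_k$, i.e.\ the output is literally the coordinate vector of the pixel within its aligned $f^l$-window. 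What each buys: the paper's version is more compact (one channel, no cross-channel routing, no need to preserve a constant through the decoder), while yours produces a directly interpretable ID (the coordinates themselves) and makes the injectivity question trivial -- which, incidentally, means the step you flag as ``the crux'' is not a gap at all: your map from window positions to outputs is the identity on coordinates, so distinctness is immediate. Two small points to tighten: (i) your opening move ``set every ordinary convolution kernel in both encoder and decoder to zero'' contradicts your later requirement that the decoder's conv+ReLU layers pass $A$ and $B_k$ through unchanged; you want encoder convs zeroed (with a bias producing the constant) but decoder convs set to identity, exactly as the paper does with its ``first convolution zero with bias 1, all others identity''; (ii) you should state explicitly that the cross-terms $W[A,B_k,\cdot]$ and $W[B_k,B_{k'},\cdot]$, $k'\neq k$, are zero, which your wording only implies.
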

\begin{proof}
Proof by construction: 
Set the first convolution to weights zero and bias 1. This yields a constant feature map. Set all other convolutions to identity. Thus, a feature map in the bottleneck layer will be constant. Ignore skip connections by setting respective convolution kernel entries to zero. 
Construct upsampling filter kernels $p_1 \dots p_l$ by filling them with non-repeating prime numbers. For this, $l \cdot f^d$ prime numbers are needed. Each of the $f^{dl}$ output functions $u_i$ of this U-Net instance yields a product over a unique set of $l$ distinct prime numbers. As the decomposition of any number into prime factors is unique,  respective outputs effectively assign a unique ID to each output pixel. 
\label{proof:unique_ID}
\end{proof}
Fig.~\ref{fig:pixel_IDs} showcases the level of location awareness that can be reached with a U-Net with valid padding and learnt upsampling, trained via metric learning with discriminative loss~\cite{de2017semantic} to segment pixels as individual instances given a constant input image. This confirms that a U-Net instance akin to the construction in Proof~\ref{proof:unique_ID} can be trained. A comparable effect of location awareness, albeit with conceptually different cause, has been described for zero-padding~\cite{kayhan2020translation}, which we showcase in Suppl.\ Fig.\ 2

Assigning unique IDs to pixels is yet another example of reaching the upper bound of distinguishing $f^{dl}$ instances (cf.\ Fig.\ \ref{fig:max_num_inst}), namely for the extreme case that each pixel in a constant input image forms an individual instance. However, this can only be achieved with learnt upsampling, or non-valid padding (cf.\ \cite{kayhan2020translation}). This is because for valid padding and fixed upsampling, a constant input image is always mapped to a constant output image. 

To our knowledge, our work is first to report location awareness given valid padding, thereby raising the question whether approaches that explicitly consider pixel locations or some other form of pixel IDs as additional inputs might be obsolete in case of valid padding and learnt upsampling. 
\section{Practical Impact}
We empirically assessed the practical impact of periodic-t shift equivariance on instance segmentation on synthetic images with added noise and deformations (Sec.~\ref{subsec:noise}), as well as on benchmark data (Sec.~\ref{subsec:benchmarks}).
\subsection{Noise and Small Deformations}
\label{subsec:noise}
A U-Net with $l$ levels and pooling factor $f$ fails to discriminate any instances in an infinite image of periodic-$f^l$ arranged objects (cf.~Fig.~\ref{fig:max_num_inst_a}). However we showcase in Suppl.~Figs.~3a, 3b that it may suffice to add slight Gaussian noise or small random elastic deformations to the input image to ``fix" the shift equivariance problem. Here, we generate noise or deformations randomly, on-the-fly per training step as well as at test time. Hence the observed effect is not due to over-fitting to a particular noisy/deformed image. 

However, note that neither noise nor elastic deformations do anything to fix the issue of inconsistencies in a tile-and-stitch approach if stitching is not performed according to the rules derived in Sec.~\ref{sec:Avoiding_False_Splits}, as illustrated in Suppl.~Figs.~3c,~3d.

\subsection{Quantitative Evaluation on Benchmark Data}
\label{subsec:benchmarks}
\noindent\textbf{Avoiding False Splits: } We assessed the practical impact of
correct tile-and-stitch on avoiding false split errors
on three cell nuclei segmentation datasets, namely BBBC006~\cite{ljosa2012annotated}, DSB2018~\cite{schmidt18_cell_detec_with_star_convex_polyg,Caicedo2019}, and nuclei3d~\cite{hirsch2020auxiliary,long20093d} (see Suppl.~Sec.~3 for details). %
We assessed AP~0.5, as well as false split- and false merge errors as defined in~\cite{caicedo2019evaluation}. 
We performed tile-and-stitch with a range of output window sizes. Correct tile-and-stitch, i.e.\ with output window size $n\cdot f^l$, drastically reduces false splits and increases AP~0.5 accordingly, as plotted in Fig.\ \ref{fig:split_avap_vs_crop}, and exemplified in  Fig.~\ref{fig:inconsistencies_d}-\ref{fig:inconsistencies_f} and Suppl.~Fig.~1.
\begin{figure*} 
    \centering
        \begin{subfigure}[b]{0.3\textwidth}
        \includegraphics[trim=0 0 0 0, clip,width=0.99\columnwidth]{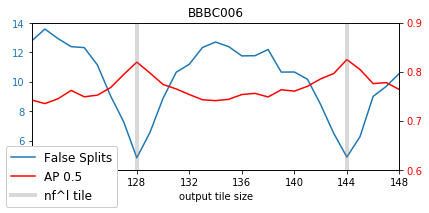}
    \end{subfigure}
    \hfill
      \begin{subfigure}[b]{0.3\textwidth}
        \includegraphics[trim=0 0 0 0, clip,width=0.99\columnwidth]{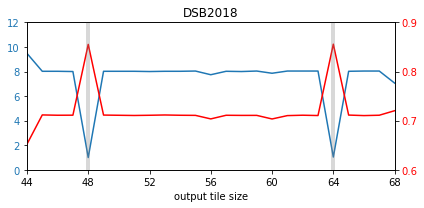}
    \end{subfigure}
    \hfill
      \begin{subfigure}[b]{0.3\textwidth}
        \includegraphics[trim=0 0 0 0, clip,width=0.99\columnwidth]{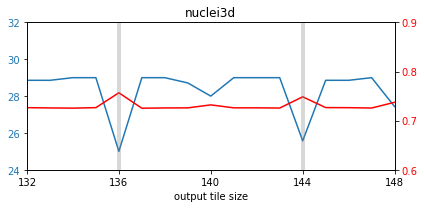}
    \end{subfigure}
    \caption{Output tile size vs.\ false splits and AP 0.5 on three cell nuclei datasets, covering 2d and 3d image data. 2d: Left:  BBBC006~\cite{ljosa2012annotated}, middle: the DSB2018~\cite{schmidt18_cell_detec_with_star_convex_polyg} subset of BBBC038v1~\cite{Caicedo2019}. 
    3d: Right: nuclei3d~\cite{hirsch2020auxiliary,long20093d}}
    \label{fig:split_avap_vs_crop}
\end{figure*}

\noindent\textbf{Distinguishing Instances: } We assessed the practical impact of periodic-$f^l$ shift equivariance on distinguishing instances on BBBC006
. If periodic-$f^l$ shift equivariance were of practical impact here, we would (1) expect to see false merge errors of \emph{distant}, i.e.\ non-touching, objects (whereas \emph{touching} objects may be merged for other, confounding reasons), and (2) we would expect distant false merges to occur for \emph{similar-looking} objects. On average, for 97.3 instances per test image, 3.8 false merge errors occur, and merged instances do not look similar by eye, as exemplified in Fig.~\ref{fig:distant-false-merge}.
This empirical study cannot prove that periodic-$f^l$ shift equivariance is not of  practical impact on distinguishing instances -- this would only follow if there were \emph{no} false merges, which is unlikely due to chance alone. However, it does suggest that the impact is negligible. 
\begin{figure}
    \centering
    \includegraphics[trim=20 20 20 20, clip,width=0.2\columnwidth]{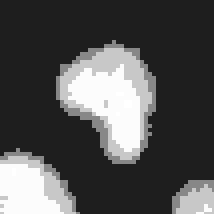}
    \includegraphics[trim=20 20 20 20, clip,width=0.2\columnwidth]{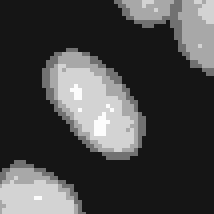}
    \caption{Exemplary distant-merged objects do not look alike in raw images (excerpts from BBBC006~\cite{ljosa2012annotated}), making periodic-$f^l$ shift equivariance an unlikely culprit.}
    \label{fig:distant-false-merge}
\end{figure}
\section{Discussion and Conclusion}
Our work provides a formal analysis of the impact of shift equivariance properties of common encoder-decoder style CNNs on the task of metric learning for instance segmentation. Contrary to a range of works that have dismissed it as fundamentally flawed due to the assumed shift equivariance of CNNs, our theoretical analysis reveals the precise shift equivariance properties of U-Net style CNNs, from which follows that a U-Net with $l$ levels and downsampling factor $f$ is indeed able to distinguish up to $f^{dl}$ identical-looking (in terms of their respective receptive fields) instances in a $d$-dimensional image, given that object spacing is co-prime to $f^l$ in any dimension. 
In particular, our work refutes some findings of Novotny et al.~\cite{novotny2018semi} on similar synthetic imagery of periodically arranged discs (cf.\ their \href{https://openaccess.thecvf.com/content_ECCV_2018/papers/Samuel_Albanie_Semi-convolutional_Operators_for_ECCV_2018_paper.pdf}{\mbox{Fig.\ 3c}} in~\cite{novotny2018semi}): They attribute the observed ``near-random'', noise-like patterns within instances to the assumed ill-suitedness of metric learning for the task of instance segmentation, whereas our results on comparable data exhibit clean clusters in all cases (cf.~our Fig.~\ref{fig:max_num_inst}). As for differences in our model and theirs, they omit the push force in their ``simplified'' discriminative loss, while we employ it, thus avoiding that constant embeddings across all instances constitute a global optimum. Furthermore, they employ k-means clustering with k the correct number of instances, while we employ mean shift clustering, thus avoiding that clustering results are ill-defined in case of constant embeddings across instances (cf.~our Fig.~\ref{fig:max_num_inst_a}). 
Concerning the specific patterns within the disks in their \href{https://openaccess.thecvf.com/content_ECCV_2018/papers/Samuel_Albanie_Semi-convolutional_Operators_for_ECCV_2018_paper.pdf}{\mbox{Fig.\ 3c}}, a shift equivariant CNN would necessarily yield \emph{identical patterns for identical instances}. Instead, the figure shows multiple periodically alternating distinct patterns within instances, which violates their general assumption of shift equivariance, but is consistent with our theory, given that their truncated ResNet50 architecture is periodic-4 shift equivariant as it employs one max pooling layer with $f\!=\!2$ and one convolutional layer with stride~2 (where stride works analogous to pooling in terms of its effect on shift equivariance).

Beyond our formal analysis of shift equivariance properties, we show empirically on synthetic data that adding barely visible amounts of noise or elastic deformation can enable a U-Net to distinguish objects even at "unfortunate" object spacing $f^l$. Furthermore, we show on real data that, while distant objects are falsely merged sporadically, this cannot straightforwardly be attributed to shift equivariance, as we do not find respective merged instances to look similar by visual inspection. 

We deem of even greater impact to practitioners our theoretical analysis of inconsistencies that have been reported when performing metric learning with discriminative loss for instance segmentation in a tile-and-stitch approach due to large, GPU-memory-busting inputs. To this end, our theoretical analysis of shift equivariance allows us to derive a simple set of rules that necessarily have to be followed to avoid inconsistencies at stitching boundaries when performing inference on large data. While our impact analysis in this work is tailored to metric learning with discriminative loss, the same theory of shift equivariance yields similar implications for other pixel-wise prediction tasks for which tile-and-stitch issues with inconsistencies have been reported, like semantic segmentation (as e.g.\ studied empirically in~\cite{reina2020systematic}) or image registration. In particular, the proven equivalence between whole-image prediction and tile-and-stitch prediction with output tile size $n \cdot f^l$ holds independent of the specific training task.
\newline
\textbf{Acknowledgements}. We wish to thank Anna Kreshuk and Fred Hamprecht for inspiring discussions. Funding: J.L.R.: German Research Foundation RTG 2424. J.L.R, M.D., P.H., L.M., A.M. and D.K.: Berlin Institute of Health and Max-Delbrueck-Center for Molecular Medicine in the Helmholtz Association (MDC). P.H.: MDC-NYU exchange program and HFSP grant RGP0021/2018-102. X.Y.: Helmholtz Einstein International Berlin Research School in Data Science. V.E.G.: German Research Foundation CRC 1404. J.F.: HHMI. P.H., L.M. and D.K. were supported by the HHMI Janelia Visiting Scientist Program.
{\small
\bibliographystyle{ieee_fullname}
\bibliography{egbib}
}
\end{document}